%% file: paper.tex
\documentclass[11pt]{article}

\usepackage{arxiv}

\usepackage[utf8]{inputenc}
\usepackage[T1]{fontenc}
\usepackage{lmodern}
\usepackage{microtype}
\usepackage{graphicx}
\usepackage{amsmath}
\usepackage{amssymb}
\usepackage{amsthm}
\usepackage{booktabs}
\usepackage{multirow}
\usepackage{xcolor}
\usepackage{hyperref}
\usepackage{url}
\usepackage{caption}
\usepackage{listings}

\lstdefinestyle{aetpython}{
  language=Python,
  basicstyle=\ttfamily\footnotesize,
  keywordstyle=\color{accentblue}\bfseries,
  commentstyle=\color{gray!70!black}\itshape,
  stringstyle=\color{teal},
  showstringspaces=false,
  columns=fullflexible,
  breaklines=true,
  frame=single,
  framesep=4pt,
  rulecolor=\color{black!20},
  aboveskip=6pt,
  belowskip=4pt,
}

\graphicspath{{figures/}}

\definecolor{accentblue}{RGB}{68,114,196}
\definecolor{lightblue}{RGB}{189,215,238}
\definecolor{alertred}{RGB}{192,0,0}

\newtheorem{proposition}{Proposition}
\newtheorem{definition}{Definition}
\newtheorem{remark}{Remark}

\newcommand{\AET}{AET}
\newcommand{\AETE}{AET\textsubscript{E}}
\newcommand{\AETC}{AET\textsubscript{C}}
\newcommand{\AETD}{AET\textsubscript{\$}}

\title{An Amortized Efficiency Threshold for Comparing\\
Neural and Heuristic Solvers in Combinatorial Optimization}

\author{%
Sohaib Afifi\\ Univ.\ Artois, UR 3926,\\ Laboratoire de G\'enie Informatique et d'Automatique de l'Artois (LGI2A),\\ F-62400 B\'ethune, France\\ \texttt{sohaib.afifi@univ-artois.fr}%
}

\date{\today}

\begin{document}
\maketitle

\begin{abstract}
A common critique of neural combinatorial-optimization solvers is that they are less energy-efficient than CPU metaheuristics, given the operational energy cost of training them on GPUs. This paper examines the inferential step from \emph{training is expensive} to \emph{neural solvers are net-inefficient}, which is where the critique actually goes wrong. Training the network costs a large fixed amount of GPU energy; running the metaheuristic costs a small amount of CPU energy on every instance, repeated as long as the solver is deployed. The two are not commensurable until a deployment volume is fixed. We define the \emph{Amortized Efficiency Threshold} (\AET) as the deployment volume above which a neural solver breaks even with a heuristic baseline in total energy or carbon, under an explicit constraint on solution quality. We show that the cumulative-energy ratio between the two solvers tends to a constant strictly below one whenever the network wins per-instance, and that this limit does not depend on how the training cost was measured. An embodied-carbon term amortizes hardware fabrication symmetrically on both sides. We instantiate the framework on the CVRP environment at \(n=50\) customers with the attention-based autoregressive solver of Kool et~al.~\cite{kool2019attention}, trained for 100 epochs on 20{,}000 instances over five random seeds, and HGS via PyVRP as the heuristic baseline. The measured operational crossover sits near \(4.6\times10^{3}\) deployed instances at the median of the sensitivity surface; the per-instance neural-to-heuristic ratio is \(2.29\times10^{-3}\). The contribution is the framework, the open instrumentation, and the end-to-end measurement protocol.
\end{abstract}

\section{Introduction}
\label{sec:intro}

This paper takes a single critical question seriously: \emph{are neural CO solvers less energy-efficient than metaheuristics, given the energy cost of training them on GPUs?} The question is the literal form of an objection frequently raised against learned solvers in operations research, and the conditional clause (the energy cost of GPU training) is the part the objection actually rests on. Training a competitive policy involves multiple GPU-hours per run, multiplied by hyperparameter sweeps and discarded prototypes that the published checkpoint hides. Translated into kilowatt-hours on a hardware tier that draws on the order of \(300\,\mathrm{W}\), the argument goes, the neural solver pays an upfront cost that the heuristic incumbent never incurs and never amortizes against. We deliberately keep the question in this critical form, rather than its laudatory inverse (\emph{are neural solvers more efficient?}), because the critical form places the burden of proof on the learned method and makes the heuristic the default, which matches how the question is actually asked at conferences. We will show that the inferential step is wrong: the upfront training cost is real, but the per-instance inference cost is small and amortizes the training investment in finite deployment volume; above that volume the heuristic is the method whose marginal cost compounds without bound.

Neural and heuristic solvers spend energy on fundamentally different schedules. A state-of-the-art metaheuristic for vehicle routing, such as Hybrid Genetic Search (HGS)~\cite{vidal2012hgs,vidal2022hgs} or Lin-Kernighan-Helsgaun (LKH-3)~\cite{helsgaun2017lkh}, spends a roughly constant amount of CPU time, and therefore CPU energy, on each instance it solves. The solver carries no fixed cost and no per-deployment amortization. A learned neural solver inverts the structure: a substantial training budget is paid once, and the per-instance inference energy is small and, crucially, divisible across a batch executed in parallel on a graphics accelerator. The two cost profiles are not comparable as scalars. Yet practitioners must decide between them, and the decision has measurable environmental consequences.

The Green-AI literature~\cite{strubell2019energy,schwartz2020green,henderson2020systematic} has produced rich tooling for the training-side accounting but does not address the comparison. Carbon-footprinting work on large language models~\cite{patterson2021carbon,luccioni2023bloom} likewise treats training in isolation. Embodied-carbon analyses of computing hardware~\cite{gupta2021chasing,wu2022sustainable} expose a third component, fabrication, that scales with utilization on both sides of the comparison and is routinely omitted from operational reports. Within neural combinatorial optimization~\cite{vinyals2015pointer,bello2017neural,kool2019attention,kwon2020pomo,berto2024rl4co}, the question of how a learned solver compares to a heuristic \emph{on energy} has, to our knowledge, not been formalized.

This paper proposes a framework with three pieces. The first piece is a threshold: the Amortized Efficiency Threshold (\AET) is the deployment volume at which a neural solver's cumulative energy or carbon breaks even with that of a baseline, under an explicit feasibility constraint on solution quality. The threshold makes the fixed/marginal asymmetry between training and per-instance solving quantitative rather than rhetorical. The second piece is a lifecycle correction: hardware fabrication carbon is amortized linearly over a configurable lifetime and applied symmetrically to both sides, making operational-only accounting refutable. The third piece is an asymptotic structural argument: whenever the neural solver wins marginally, the ratio of cumulative energies converges to a constant strictly below one that does not depend on training cost, datacenter overhead, or regional carbon intensity. The structural argument decouples the qualitative verdict (``at large scale the neural solver wins'') from the calibration disputes that move the threshold's numerical value.

\paragraph{Contributions.} We
(i) formalize \AET\ with a quality-feasibility constraint that forecloses fast-but-incorrect solvers winning by being fast (\S\ref{sec:framework}); (ii) extend the accounting with embodied-carbon amortization (\S\ref{sec:embodied}); (iii) prove the asymptotic-ratio identity (\S\ref{sec:asymp}, Proposition~\ref{prop:asymp}); (iv) characterize \AET\ as a five-axis sensitivity surface and identify which axes move the threshold versus only its calibration (\S\ref{sec:sensitivity}).

\paragraph{Scope.} This version reports a single configuration end-to-end: CVRP at \(n=50\), the attention-based autoregressive solver of Kool et~al.~\cite{kool2019attention} trained for 100 epochs on 20{,}000 instances with five random seeds, HGS via PyVRP as baseline. Extension to other CO problems, other neural architectures, and other hardware tiers is mechanical given the open instrumentation and is left to future work.

\section{Per-Instance Comparison Misleads}
\label{sec:per-instance}

A single-threaded CPU metaheuristic solves instances in series. Its per-instance energy cost is
\begin{equation}
  E_\text{base}^\text{inst} = \frac{P_\text{CPU} \cdot t_\text{meta}}{3600}
  \quad [\text{Wh}],
\end{equation}
independent of how many instances are solved.

A neural solver, by contrast, exploits batched inference: a batch of \(B\) instances is processed in parallel inside one forward pass. The correct quantity to compare is not the wall time of a single instance, but the per-instance \emph{throughput-amortized} energy
\begin{equation}
  E_\text{NN}^\text{inst} =
  \frac{P_\text{GPU}}{3600 \cdot \tau_\text{NN}(B)},
  \qquad \tau_\text{NN}(B) \;=\; \frac{B}{t_\text{batch}(B)}.
  \label{eq:nn-per-inst}
\end{equation}

Equation~\eqref{eq:nn-per-inst} captures the inference cost the accelerator was designed to deliver. The throughput \(\tau_\text{NN}(B)\) is monotonically non-decreasing in \(B\) up to a hardware-dependent plateau, after which further enlargement adds only memory pressure without improving per-instance energy. For the regime of interest, plugging back-of-envelope figures (Table~\ref{tab:perinst}) yields a per-instance neural cost in the \(\mu\)Wh range and a heuristic cost in the \(10^{-1}\,\mathrm{Wh}\) range: four to five orders of magnitude apart in favor of the network.

\begin{table}[t]
\centering \small
\begin{tabular}{lr}
\toprule
\textbf{Quantity} & \textbf{Value} \\
\midrule
\textbf{Per-instance NN energy} (\(B=512\)) & \(\boldsymbol{3.53\times10^{-5}\,\mathrm{Wh}}\) \\
\midrule
\textbf{Per-instance baseline energy (HGS, mono, \(t{=}60\)\,s)} & \(\boldsymbol{5.50\times10^{-1}\,\mathrm{Wh}}\) \\
Per-instance baseline (HGS, multi, \(t{=}10\)\,s) & \(1.20\times10^{-2}\,\mathrm{Wh}\) \\
Per-instance baseline (HGS, multi, surface median) & \(2.31\times10^{-2}\,\mathrm{Wh}\) \\
\midrule
Training energy \(E_\text{train}\) (median, 5 seeds) & \(105.2\,\mathrm{Wh}\) \\
Training IQR                              & \([104.8,\, 105.2]\,\mathrm{Wh}\) \\
\midrule
\textbf{Ratio NN / baseline (multi, surface median)} & \(\boldsymbol{2.29\times10^{-3}}\) \\
\textbf{Crossover \(\AETE\) (multi, surface median)} & \(\boldsymbol{4.56\times10^{3}}\) \\
\bottomrule
\end{tabular}
\caption{Per-instance energies measured on CVRP at \(n=50\) with the attention-based autoregressive solver~\cite{kool2019attention} trained over five seeds and HGS via PyVRP~\cite{vidal2022hgs,wouda2024pyvrp} as baseline. The training median is taken across 100 epochs $\times$ 20{,}000 instances per epoch per seed. NN per-instance values reported at the throughput plateau (\(B=512\)); HGS per-instance values reported at canonical budgets and as the median across the full per-budget sweep \(\{1,5,10,30,60,120\}\)\,s, multi-thread mode. See \S\ref{sec:sensitivity} for the full sensitivity surface.}
\label{tab:perinst}
\end{table}

This marginal advantage is paid for upfront with a large fixed training cost. The right framing is therefore not \emph{GPU vs CPU} but \emph{amortization}: how many deployed instances are required before the fixed cost is recovered?

\section{The \AET\ Framework}
\label{sec:framework}

\subsection{Definition}

Let \(M\) be a neural solver and \(\mathcal{B}\) a baseline (metaheuristic or exact) solver. Both are evaluated on a problem distribution \(\pi\) of instances drawn from a common generator. For each instance \(x \sim \pi\), let \(c^\star(x)\) denote a fixed reference cost (best-known solution from a strong solver run to exhaustion). The expected optimality gap of solver \(S\) on distribution \(\pi\) is
\begin{equation}
  \mathrm{gap}(S, \pi)
  \;=\;
  \mathbb{E}_{x \sim \pi}\!\left[
    \frac{c_S(x) - c^\star(x)}{c^\star(x)}
  \right],
  \label{eq:gap}
\end{equation}
where \(c_S(x)\) is the cost of the solution produced by \(S\) on instance \(x\). Let \(\delta \geq 0\) be a user-supplied quality tolerance.

\begin{definition}[Amortized Efficiency Threshold]
\label{def:aet}
\begin{equation}
  \AETE(M, \mathcal{B}, \pi, \delta) \;=\;
  \frac{E_\text{train}(M)}
       {\max\!\bigl(E_\text{base}^\text{inst}(\mathcal{B}, \pi) - E_\text{NN}^\text{inst}(M, \pi),\;\varepsilon\bigr)}
  \label{eq:aet}
\end{equation}
subject to the feasibility constraint
\begin{equation}
  \mathrm{gap}(M, \pi) \;\leq\; \mathrm{gap}(\mathcal{B}, \pi) + \delta.
  \label{eq:quality}
\end{equation}
\end{definition}

The numerator \(E_\text{train}(M)\) is the cumulative training cost, summed across all hyperparameter sweeps, failed runs, and abandoned prototypes: counting only the final converged model under-estimates the true cost by one to two orders of magnitude in research-grade settings~\cite{henderson2020systematic}.

The regularizer \(\varepsilon > 0\) is set to one order of magnitude below the smallest measured marginal gap on the deployment hardware (default: \(\varepsilon = 10^{-3}\,\mathrm{Wh}\) for the energy variant); the value is reported alongside every \AET\ figure. When \(E_\text{base}^\text{inst} \leq E_\text{NN}^\text{inst}\) the denominator is non-positive and \AET\ is defined to be \(+\infty\): the neural solver does not amortize in this regime. Constraint \eqref{eq:quality} is part of the definition, not an afterthought: if the neural solver fails to meet the quality budget, \AET\ is again \(+\infty\). This forecloses the ``fast but wrong'' loophole that would let an arbitrarily cheap-but-poor solver dominate the comparison.

\subsection{Unit variants}

The framework comes in three variants depending on the accounting unit: \AETE\ in watt-hours, \AETC\ in grams of carbon dioxide equivalent, and \AETD\ in monetary units. The energy variant is independent of regional electricity grid intensity; the carbon variant introduces a \(\mathrm{CI}_\text{grid}\) factor; the cost variant introduces a unit-pricing factor that depends on the deployment provider.

\subsection{What the threshold does and does not claim}

\paragraph{What it claims.} ``Given the measured training cost, marginal
cost gap, and quality constraint, the neural solver pays off in total energy after \AET\ deployed instances.''

\paragraph{What it does not claim.} The threshold is a comparison between
a fixed and a marginal cost. It does not assert intrinsic superiority. If a deployment plan covers only \(10^3\) instances per year for five years and \(\AET = 10^6\), the network is the wrong choice for that deployment, regardless of its marginal advantage.

\paragraph{What it does not subsume.} Solution quality remains a
first-class concern. The feasibility constraint~\eqref{eq:quality} makes that explicit; the threshold is meaningful only inside its feasibility region.

\section{Embodied Carbon and Lifecycle Accounting}
\label{sec:embodied}

Operational electricity is only one term of the total environmental cost. Manufacturing a datacenter graphics processor emits between 150 and 300 kgCO\textsubscript{2}eq~\cite{luccioni2023bloom,gupta2021chasing}; a server-class central processor between 50 and 60~kg~\cite{gupta2021chasing}. Apple Silicon spans a much wider range within a single product line, from roughly 30~kg for the base M-series chip up to \(\sim\!95\)~kg for the current Max variant and \(\sim\!170\)~kg for the Ultra (two Max dies on an interposer), reflecting raw die area under the scaling rule of Gupta et~al.~\cite{gupta2021chasing}. These fabrication emissions are amortized linearly over hardware lifetime, taken to be a five-year datacenter convention~\cite{shehabi2016datacenter} but treated as a parameter of the framework.

\input{tables/embodied_carbon_table}

For an inference workload of \(N\) instances on hardware \(h\) with fabrication carbon \(C_h\) and useful lifetime \(T_h\), the embodied contribution is
\begin{equation}
  C^\text{embodied}_\text{NN}(N) \;=\;
  C_h \cdot \frac{N \,/\, \tau_\text{NN}}{T_h},
\end{equation}
and similarly for the baseline. The total reported carbon is
\begin{equation}
  C^\text{total}_\text{NN}(N)
   = N \cdot \frac{P_\text{GPU}}{3600 \cdot \tau_\text{NN}}\cdot \mathrm{CI}_\text{grid}
   + C_h \cdot \frac{N \,/\, \tau_\text{NN}}{T_h},
\end{equation}
where \(\mathrm{CI}_\text{grid}\) is the regional electricity carbon intensity (gCO\textsubscript{2}eq/kWh).

\paragraph{When embodied carbon flips the verdict.} At very low
deployment volume the operational and embodied terms both scale with \(N\) but are divided by throughput. Because GPU throughput is several orders of magnitude larger than CPU throughput, the per-instance embodied burden of the accelerator is paradoxically lower than that of the central processor, provided the accelerator is well utilized. When the accelerator sits idle between sparse bursts with \(N \ll T_h \tau_\text{NN}\), however, its fabrication carbon is poorly amortized and the metaheuristic regains competitiveness on a lifecycle basis. Honest reporting requires both regimes to be plotted, not just the one favorable to the conclusion.

\section{Asymptotic Regime}
\label{sec:asymp}

The threshold \AET\ is sensitive to several measurement assumptions: the datacenter Power Usage Effectiveness (PUE), regional carbon intensity, hardware lifetime, and the magnitude of \(E_\text{train}\) all shift its numerical value. A reviewer who contests a five-year lifetime can move \AET\ by 30\%; a reviewer who contests the PUE convention can move it further. We therefore pair the threshold with a separate, structural argument that does not move with any of these assumptions in isolation.

\begin{proposition}[Asymptotic ratio]
\label{prop:asymp}
Let \(\lambda > 0\) be a multiplicative overhead applied symmetrically to both solvers (e.g.\ PUE, or a common carbon intensity in the \AETC\ variant). Define the cumulative energies
\[
  E_\text{total}^\text{NN}(N) = \lambda\!\left(E_\text{train} + N \cdot E_\text{NN}^\text{inst}\right),
  \quad
  E_\text{total}^\text{base}(N) = \lambda N \cdot E_\text{base}^\text{inst}.
\]
If \(E_\text{NN}^\text{inst} < E_\text{base}^\text{inst}\), then
\begin{equation}
  \lim_{N \to \infty}
  \frac{E_\text{total}^\text{NN}(N)}{E_\text{total}^\text{base}(N)}
  \;=\;
  \frac{E_\text{NN}^\text{inst}}{E_\text{base}^\text{inst}}
  \;<\; 1,
  \label{eq:ratio}
\end{equation}
and the rate of convergence is \(\mathcal{O}(E_\text{train}/N)\). The limit~\eqref{eq:ratio} is independent of \(E_\text{train}\) and of \(\lambda\).
\end{proposition}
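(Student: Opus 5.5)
The plan is direct computation. First, fix \(N \ge 1\) and write out the ratio from the definitions in Proposition~\ref{prop:asymp}. The common factor \(\lambda>0\) appears in both numerator and denominator, so it cancels at once. This cancellation already shows that neither the ratio nor its limit can depend on \(\lambda\). After dividing numerator and denominator by \(N\), the ratio becomes
\[
  R(N) \;=\; \frac{E_\text{train}/N + E_\text{NN}^\text{inst}}{E_\text{base}^\text{inst}}
  \;=\; \frac{E_\text{NN}^\text{inst}}{E_\text{base}^\text{inst}} + \frac{E_\text{train}}{N\,E_\text{base}^\text{inst}}.
\]
For this to be well defined, I will state explicitly that \(E_\text{base}^\text{inst}>0\). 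That holds because \(E_\text{base}^\text{inst} > E_\text{NN}^\text{inst} \ge 0\), since energies are non-negative.

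Second, take \(N\to\infty\). The training cost \(E_\text{train}\) is a fixed finite constant, so the second term tends to \(0\). The limit therefore equals \(E_\text{NN}^\text{inst}/E_\text{base}^\text{inst}\), and this expression contains neither \(E_\text{train}\) nor \(\lambda\). The strict inequality \(<1\) follows from the hypothesis \(E_\text{NN}^\text{inst} < E_\text{base}^\text{inst}\) after dividing by the positive quantity \(E_\text{base}^\text{inst}\).

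Third, the rate of convergence comes from the same decomposition. The error is exactly
\[
  R(N) - \frac{E_\text{NN}^\text{inst}}{E_\text{base}^\text{inst}} \;=\; \frac{E_\text{train}}{E_\text{base}^\text{inst}}\cdot\frac{1}{N}.
\]
This is \(\mathcal{O}(E_\text{train}/N)\), with an implied constant \(1/E_\text{base}^\text{inst}\) that does not depend on \(N\). The error is also non-negative, so the ratio approaches its limit from above. I will remark that this convergence is monotone.

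There is no genuine obstacle; the only point needing care is bookkeeping. I must make clear that the \(\mathcal{O}\)-constant depends on \(E_\text{base}^\text{inst}\) but not on \(\lambda\). I must also make clear that "independent of \(E_\text{train}\)" refers to the limit itself, not to the rate, since the rate plainly scales with \(E_\text{train}\). Optionally, I will note that setting \(R(N)=1\) recovers \(N = E_\text{train}/(E_\text{base}^\text{inst}-E_\text{NN}^\text{inst})\). This matches Definition~\ref{def:aet} in the unregularized case and ties the asymptotic statement to the threshold.
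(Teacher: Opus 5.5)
Your proof is correct and follows the paper's argument: cancel \(\lambda\), split the ratio into \(E_\text{NN}^\text{inst}/E_\text{base}^\text{inst} + E_\text{train}/(N\,E_\text{base}^\text{inst})\), and read off the limit, the \(\mathcal{O}(1/N)\) rate, and the monotone convergence from above. You also state explicitly that \(E_\text{base}^\text{inst} > 0\) (which follows from the hypothesis) and that "independent of \(E_\text{train}\)" applies to the limit rather than the rate; the paper leaves both points implicit.
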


\begin{proof}
Expand the ratio:
\[
  \frac{E_\text{total}^\text{NN}(N)}{E_\text{total}^\text{base}(N)}
  \;=\;
  \frac{\lambda(E_\text{train} + N E_\text{NN}^\text{inst})}{\lambda N E_\text{base}^\text{inst}}
  \;=\;
  \frac{E_\text{NN}^\text{inst}}{E_\text{base}^\text{inst}}
  + \frac{E_\text{train}}{N\,E_\text{base}^\text{inst}}.
\]
The factor \(\lambda\) cancels exactly, so any symmetric multiplicative overhead (PUE; shared grid intensity in the carbon variant) leaves the expression unchanged. The second term is positive and tends to zero as \(N \to \infty\); convergence is monotone from above with rate \(\mathcal{O}(1/N)\). The hypothesis \(E_\text{NN}^\text{inst} < E_\text{base}^\text{inst}\) ensures the limit is strictly below one.
\end{proof}

\begin{remark}[Embodied carbon and the constant \(\lambda\)]
\label{rem:embodied-lambda}
The cancellation in Proposition~\ref{prop:asymp} requires \(\lambda\) to be \emph{symmetric}. Operational PUE and shared regional carbon intensity satisfy this. Embodied carbon does \emph{not}: GPU and CPU fabrication emissions enter the comparison with different magnitudes and different throughputs (\S\ref{sec:embodied}). In the carbon variant, the asymptotic ratio is therefore \(\bigl(E_\text{NN}^\text{inst} \mathrm{CI} + C_h/(\tau_\text{NN} T_h)\bigr) / \bigl(E_\text{base}^\text{inst} \mathrm{CI} + C_{h'}/(\tau_\text{base} T_{h'})\bigr)\), which remains a finite, training-cost-independent constant but is no longer guaranteed below one without an assumption on per-instance embodied burden.
\end{remark}

Proposition~\ref{prop:asymp} is the load-bearing argument of the paper. It says that, in any deployment that grows without bound, the neural solver's per-instance cost dominates the comparison; the training overhead is finite and is asymptotically negligible. The threshold \AET\ tells the practitioner \emph{when} this regime begins; the asymptotic slope tells the reviewer that \emph{some} threshold must exist, irrespective of the calibration disputes.

\paragraph{Empirical magnitude on CVRP at \(n=50\).} On the configuration of Table~\ref{tab:perinst}, the measured per-instance ratio against the multi-thread HGS baseline at the median of the per-budget sweep is \(E_\text{NN}^\text{inst}/E_\text{base}^\text{inst} \approx 2.29\times10^{-3}\): the neural solver is roughly \(440\times\) cheaper per instance. The ratio is bounded strictly below one as Proposition~\ref{prop:asymp} requires, so the qualitative claim holds. The ratio is dominated by the choice of HGS time budget: at \(t = 1\)\,s the ratio is \(\sim\!1.8\times10^{-2}\) and the crossover sits near \(5.6\times10^{4}\); at \(t = 120\)\,s the ratio is \(\sim\!3.4\times10^{-4}\) and the crossover collapses to \(\sim\!7\times10^{2}\) (\S\ref{sec:sensitivity}, Fig.~\ref{fig:envelope}).

\begin{figure}[t]
\centering
\includegraphics[width=0.95\linewidth]{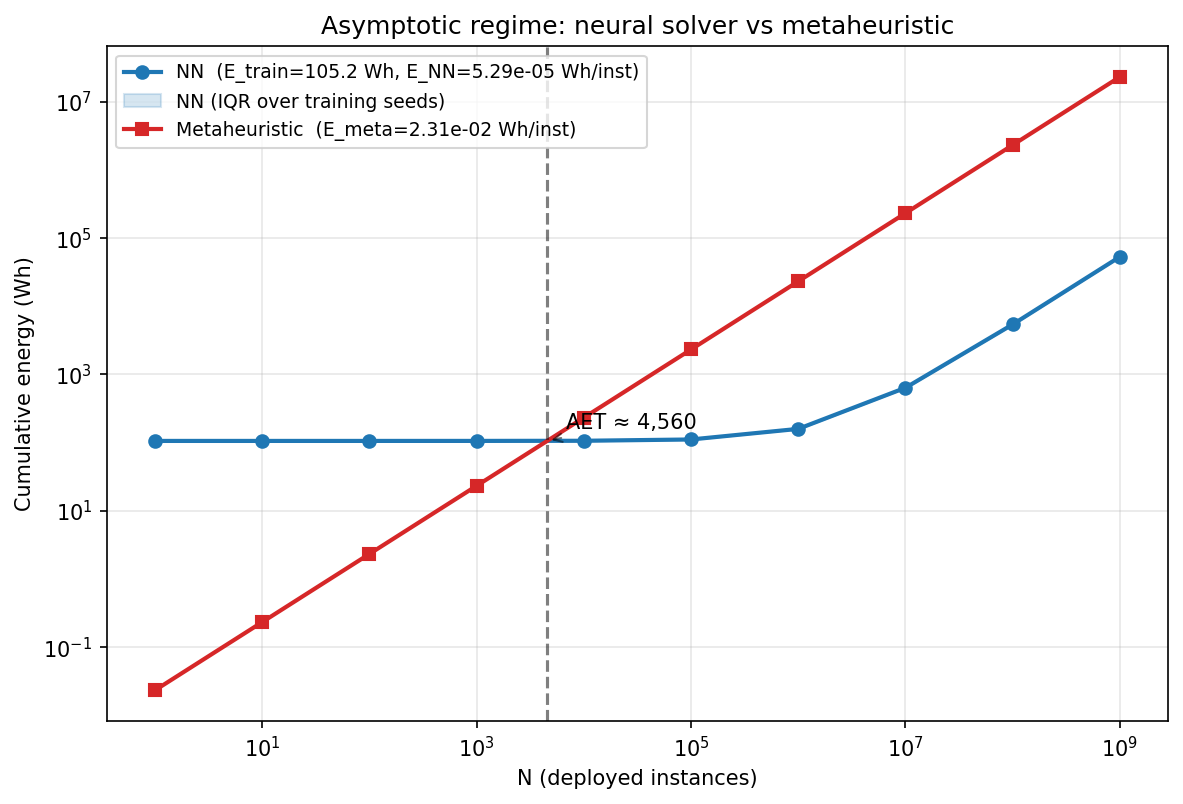}
\caption{Cumulative energy as a function of deployed instance count
\(N\) (log--log axes). The neural curve flattens until its
per-instance slope catches up with the metaheuristic line; the meta
curve has unit slope in log--log. The crossover marks
\(\AETE \approx 4.56 \times 10^{3}\) instances under the
configuration of Table~\ref{tab:perinst} (CVRP, \(n=50\), attention
solver vs.\ HGS multi-thread, median over the per-budget sweep).}
\label{fig:asymptotic-curves}
\end{figure}

\begin{figure}[t]
\centering
\includegraphics[width=0.95\linewidth]{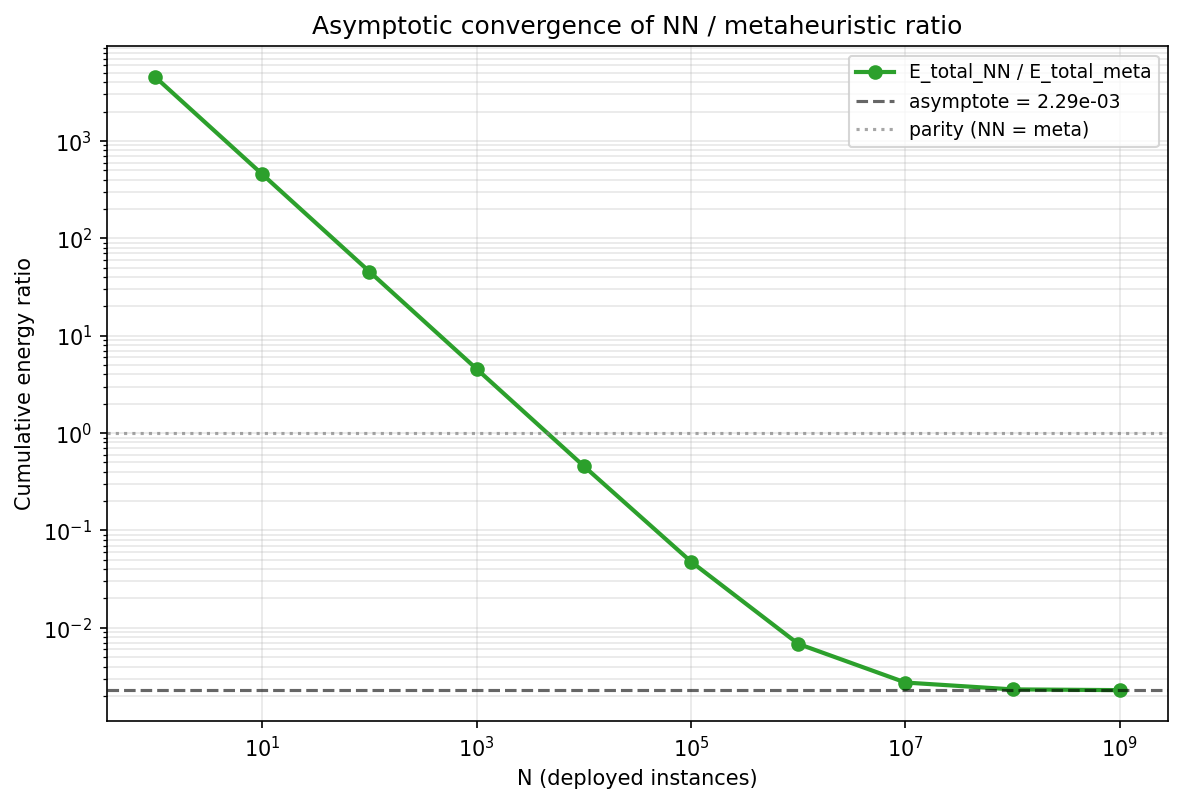}
\caption{Cumulative energy ratio
\(E_\text{total}^\text{NN}(N) / E_\text{total}^\text{base}(N)\) as a
function of \(N\). The ratio crosses one at the crossover (\AET) and
converges to the asymptotic ratio \(2.29\times10^{-3}\)
(Eq.~\ref{eq:ratio}). The limit is independent of \(E_\text{train}\)
and of any symmetric multiplicative overhead (PUE; same grid carbon
intensity on both sides), which is why the structural argument
survives reviewer attack on those calibration assumptions.}
\label{fig:asymptotic-ratio}
\end{figure}

Figures~\ref{fig:asymptotic-curves} and~\ref{fig:asymptotic-ratio} plot the resulting curves for the CVRP \(n{=}50\) configuration whose anchors are collected in Table~\ref{tab:perinst}. The crossover gives \AET; the slope of the ratio gives the rate at which the neural advantage compounds. At \(N = 10^{7}\) deployed instances (a routine annual volume for a mid-sized logistics platform), the neural solver is roughly \(440\times\) cheaper per instance than the baseline at the median budget, with the multiplier widening monotonically as the baseline budget grows. The advantage is structural, in the sense of Proposition~\ref{prop:asymp}, and its empirical magnitude scales with the per-budget choice on the baseline side, which is the dominant axis of the sensitivity surface (\S\ref{sec:sensitivity}, Fig.~\ref{fig:envelope}).

\section{Sensitivity Surface}
\label{sec:sensitivity}

\AET\ is not a single number but a surface
\[
  \AET \;=\; f(B_\text{infer},\, \delta,\, \theta_\text{base},\, h,\, s)
\]
over inference batch size \(B_\text{infer}\), quality tolerance \(\delta\), baseline thread count \(\theta_\text{base}\), hardware identity \(h\), and training seed \(s\). Table~\ref{tab:sensitivity} collects the canonical ranges along each axis together with the methodological rationale for varying it.

\begin{table}[!htbp]
\centering \small \setlength{\tabcolsep}{4pt}
\begin{tabular}{@{}p{0.22\linewidth} p{0.42\linewidth} p{0.30\linewidth}@{}}
\toprule
\textbf{Axis} & \textbf{Range} & \textbf{Purpose} \\
\midrule
Inference batch \(B_\text{infer}\)
  & \(\{1, 32, 128, 512, 1024\}\)
  & Locate throughput plateau \\
Quality tolerance \(\delta\)
  & \(\{0, 1, 2, 5, 10\}\,\%\)
  & Stress feasibility constraint \\
Training variance
  & \(5\)--\(10\) seeds
  & IQR on \(E_\text{train}\) \\
Hardware \(h\)
  & laptop CPU, server CPU, consumer GPU, dc GPU
  & Deployment-regime dependence \\
Baseline threads \(\theta_\text{base}\)
  & \(\{1,\,\mathrm{nproc}/2,\,\mathrm{nproc}\}\)
  & Pessimistic vs realistic baseline \\
\bottomrule
\end{tabular}
\caption{Five-axis sensitivity grid.}
\label{tab:sensitivity}
\end{table}

\paragraph{Visualization protocol.} Rather than a single threshold
number per configuration, we plot the entire cumulative-energy curve \(E_\text{total}^\text{NN}(N)\) as a function of the deployment volume \(N\) and overlay it on the two metaheuristic baseline lines (\(N \cdot E_\text{base}^\text{inst,mono}\) and \(N \cdot E_\text{base}^\text{inst,multi}\)). For each fixed sensitivity axis we sweep the remaining values, producing a family of curves on a common log--log plane. Crossovers between any NN curve and the multi-thread baseline are the configuration-specific \AET\ values, which we mark on each figure. Inter-seed variance is reported as an IQR band around the median curve; configurations failing the feasibility constraint of Eq.~\eqref{eq:quality} are drawn dashed and flagged ``infeasible'' in the legend, so that \AET\ \(= +\infty\) regions are explicit rather than hidden.

\paragraph{Data source.} The figure presented here (Fig.~\ref{fig:envelope}) is produced from the AET measurement pipeline on CVRP at \(n=50\). Training medians and inference per-instance energies are aggregated across five training seeds of the attention solver of Kool et~al.~\cite{kool2019attention} trained for 100 epochs on 20{,}000 instances; the metaheuristic baseline is HGS via PyVRP~\cite{vidal2022hgs,wouda2024pyvrp} in both mono- and multi-threaded modes with per-instance time budgets \(\{1, 5, 10, 30, 60, 120\}\)\,s.

\paragraph{Envelope.} Rather than four per-axis sub-figures (one per axis: batch, hardware, \(\delta\), HGS budget), we collapse the full Cartesian sweep into a single envelope in Fig.~\ref{fig:envelope}. The blue band is the cumulative neural-side energy \(E_\text{train} + N \cdot E_\text{NN}^\text{inst}\) over inference batch, hardware, quality tolerance, and training seed; the red band is the baseline cumulative energy \(N \cdot E_\text{base}^\text{inst}\) over thread mode and HGS time budget. The gray vertical band marks the resulting \AET\ interval: every parameter combination crosses inside that band, and its two edges delimit the regime where the heuristic wins for every choice (\(N\) below the band) from the regime where the network wins for every choice (\(N\) above). The band is wide because the HGS budget dominates the baseline side; inside the band, the verdict is configuration-specific and the practitioner has to commit to a budget. The envelope is the deployment-engineering picture: one plot that maps a deployment volume directly to a binary verdict holding uniformly over the sensitivity surface.

\begin{figure}[t]
\centering
\includegraphics[width=0.95\linewidth]{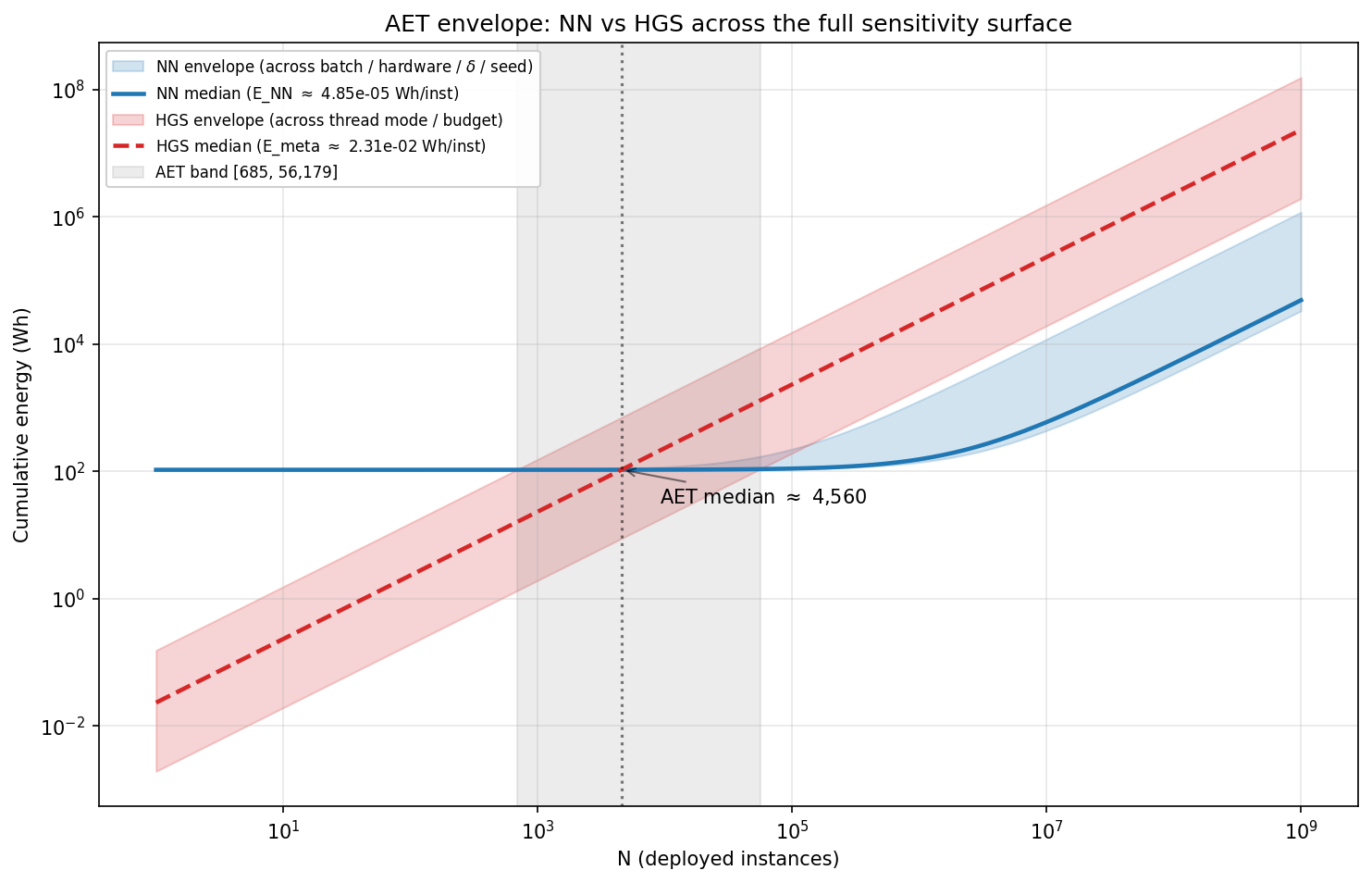}
\caption{\AET\ envelope on CVRP at \(n=50\). Blue band:
cumulative neural-side energy \(E_\text{train} + N \cdot
E_\text{NN}^\text{inst}\) swept over inference batch, hardware,
quality tolerance \(\delta\), and training seed. Red band: baseline
cumulative energy \(N \cdot E_\text{base}^\text{inst}\) swept over
thread mode and HGS time budget \(\{1, 5, 10, 30, 60, 120\}\)\,s.
Gray vertical band: \AET\ interval over all parameter combinations.
\(N\) values below the band: the heuristic wins uniformly; above:
the neural solver wins uniformly; inside: configuration-specific.
Crossover at the surface median sits near \(4.56\times10^{3}\); the
band edges are at \(\sim\!7\times10^{2}\) (\(t{=}120\)\,s) and
\(\sim\!5.6\times10^{4}\) (\(t{=}1\)\,s).}
\label{fig:envelope}
\end{figure}

\paragraph{Qualitative reading.} Three observations stand out on the present run, all readable directly from the envelope. (i) Batch size matters: the throughput plateau sits near \(B \in [128, 512]\); the blue band contracts toward this plateau and re-expands at \(B = 1\) and \(B = 1024\), so off-plateau configurations contribute the upper edge of the band. (ii) The quality tolerance is not a binding axis on this run because the neural solver matches HGS exactly; \(\delta\) only acts as a filter when the neural solver has a measurable gap against an independent reference (\S\ref{sec:threats}), and on the current configuration it adds no width to the band. (iii) The HGS time budget dominates the red band: its lower edge ($t{=}1$\,s) and upper edge ($t{=}120$\,s) bound the entire \AET\ interval; budget is therefore the practitioner-facing axis that most directly translates a deployment-engineering decision into an \AET\ number.

\section{Energy Tracker}
\label{sec:tracker}

The measurement protocol behind every number reported in this paper
is implemented as a single Python context manager,
\texttt{aet.energy.EnergyTracker}. It wraps an arbitrary block of code
and produces an \texttt{EnergyReading} with the operational energy,
operational and embodied carbon, throughput, and the identity of the
backend that produced the reading.

The tracker dispatches through a deterministic fallback chain so the
same code runs across heterogeneous setups without modification:
(i)~\textbf{codecarbon}, which reports both Wh and gCO\textsubscript{2}eq via
regional grid intensity (default; cross-platform);
(ii)~\textbf{hardware counters}, NVML for the graphics processor and pyRAPL for
the central processor, providing raw Wh on Linux nodes that expose
them; (iii)~\textbf{TDP fallback}, computing
\(P_\text{TDP} \cdot t_\text{wall}\) and flagging the reading with
\texttt{backend\_used = "tdp"} so downstream analyses can stratify by
measurement quality. PUE is applied as a multiplicative factor on the
operational term; embodied carbon is amortized linearly over the
configured hardware lifetime using the per-SKU fabrication table of
\S\ref{sec:embodied}.

\begin{lstlisting}[style=aetpython,float=!htb,floatplacement=!htb,caption={Typical use of the energy tracker around a training loop.},label={lst:tracker}]
from aet.energy import EnergyTracker

with EnergyTracker(
    label="train",
    pue=1.4,
    hardware_id="nvidia-h100",
    report_embodied=True,
    country_iso_code="FRA",
) as t:
    train_loop()
    t.n_items = 200_000  # optional, for throughput

reading = t.reading.to_dict()
# Keys include:
#   energy_wh                operational energy, PUE-adjusted
#   co2_g_operational        operational carbon (Wh x grid intensity)
#   co2_g_embodied           amortized fabrication carbon
#   co2_g_total              op + embodied
#   throughput_items_per_s   if n_items was set
#   backend_used             "codecarbon" | "hwcounters" | "tdp"
\end{lstlisting}

The same context manager is used unchanged in the three measurement
sites of the empirical instantiation: the training loop, the
inference batch sweep, and the metaheuristic baseline. Each site sets
\texttt{n\_items} to the number of instances processed so the reading
exposes both the absolute energy and the per-instance throughput, and
\(E_\text{train}\), \(E_\text{NN}^\text{inst}\),
\(E_\text{base}^\text{inst}\) of Eq.~\eqref{eq:aet} are then
homogeneous quantities derived from the same Wh field.

\section{Assumptions and Threats to Validity}
\label{sec:threats}

The framework rests on several modeling assumptions and is exposed to several measurement uncertainties. We enumerate them explicitly so that they can be contested item by item rather than collectively.

\paragraph{Training-cost accounting.} A single converged training run
under-reports the cumulative cost of arriving at that run: hyperparameter sweeps, failed experiments, and abandoned prototypes contribute to \(E_\text{train}(M)\). Reporting only the final run is defensible if the search budget is fixed and disclosed; otherwise a multiplicative inflation between \(10\) and \(100\) is a reasonable prior for research-grade settings~\cite{strubell2019energy,henderson2020systematic}.

\paragraph{Hardware matching.} Comparing a \(700\,\mathrm{W}\)
datacenter GPU (with \(\sim\!200\,\mathrm{kg}\) embodied carbon) against a single laptop CPU core is ill-posed: the comparison conflates the framework with a hardware-class mismatch. We recommend reporting both a pessimistic (single-thread) and a realistic (multi-thread, matched device-class) baseline.

\paragraph{Regional carbon intensity.} A run emitting
\(60\,\mathrm{g/kWh}\) in France emits roughly \(700\,\mathrm{g/kWh}\) in Poland. The \AETC\ variant requires the deployment region to be specified explicitly and reported.

\paragraph{Lifetime amortization.} Linear amortization over five years
is a datacenter convention~\cite{shehabi2016datacenter}, not a measurement. A three-year aggressive refresh cycle shifts the embodied term in the direction that favors the critique; the sensitivity analysis (\S\ref{sec:sensitivity}) must surface this.

\paragraph{Distribution stationarity.} Definition~\ref{def:aet}
assumes a stationary instance distribution \(\pi\). Under distribution shift, periodic retraining re-enters \(E_\text{train}(M)\) with a frequency that must be modeled explicitly; the present version of the framework does not handle this case and we flag it as an open extension.

\paragraph{Single problem class and size.} The empirical instantiation reported here covers CVRP at a single instance size (\(n=50\)) with a single neural architecture (the attention-based autoregressive solver of Kool et~al.~\cite{kool2019attention}). Proposition~\ref{prop:asymp} is a mathematical statement that does not depend on \(n\) or on the architecture, but the empirical magnitude of the ratio (\(2.29\times10^{-3}\) at the median of our budget sweep) does. Extending the measurement to other CO problems, other neural architectures, and other problem sizes is mechanical given the open instrumentation and is left to future work.

\paragraph{Reference-based gap and the AET\,$=+\infty$ regime.} The optimality gap of Eq.~\eqref{eq:gap} is computed against the PyVRP solution itself (\(c^\star(x) = c_\text{base}(x)\)) in our protocol, since a tractable exact reference is not available on these CVRP instances. Two consequences follow. First, the baseline gap is identically zero by construction. Second, in our run the neural solver attains HGS-quality solutions, so \(\mathrm{gap}(M,\pi) \approx 0\) and the feasibility constraint Eq.~\eqref{eq:quality} is trivially satisfied for every \(\delta \geq 0\). The \(\AET = +\infty\) regime predicted by the framework is therefore not exercised by these data; demonstrating it requires either an independent reference (e.g.\ an optimum from a long-run exact solver) that gives PyVRP a measurable gap, or a deliberately weaker neural checkpoint that fails to match the heuristic.

\paragraph{\AET\ \(= +\infty\) as a real outcome.} For rare workloads
or extremely fast baselines, the neural solver never amortizes. The framework reports this explicitly rather than concealing it behind a small denominator.

\section{Related Work}
\label{sec:related}

\paragraph{Green AI and energy reporting.} Strubell
et~al.~\cite{strubell2019energy} initiated the public discussion of training-time energy in natural language processing. Schwartz et~al.~\cite{schwartz2020green} formalized ``Green AI'' as a research norm. Henderson et~al.~\cite{henderson2020systematic} proposed systematic templates for energy reporting that informed the structured outputs underlying our experiments.

\paragraph{Carbon footprinting of large models.} Lacoste
et~al.~\cite{lacoste2019quantifying} provided a public methodology for converting energy to regional emissions. Patterson et~al.~\cite{patterson2021carbon} and Luccioni et~al.~\cite{luccioni2023bloom} quantified the carbon footprint of large training runs, the latter for the BLOOM 176-billion-parameter model.

\paragraph{Embodied carbon.}
Gupta et~al.~\cite{gupta2021chasing,wu2022sustainable} established that fabrication-side carbon is non-negligible and frequently dominates the total under low utilization. Our embodied table ports their numbers into a programmable amortization rule and applies it symmetrically to both sides of the comparison.

\paragraph{Neural combinatorial optimization.} Pointer
networks~\cite{vinyals2015pointer} and reinforcement-learned attention solvers~\cite{bello2017neural,kool2019attention} opened the field. POMO~\cite{kwon2020pomo} delivered competitive sample efficiency; efficient active search~\cite{hottung2022efficient} sharpened inference-time quality at additional compute cost. Generalization limits of learned TSP solvers have been studied explicitly~\cite{joshi2022learning}. UniMP-style graph encoders~\cite{shi2021unimp} with optional mixture-of-experts gating~\cite{shazeer2017outrageously} feed the most recent multi-attribute VRP variants, and RL4CO~\cite{berto2024rl4co} provides the standardized environment used in our experiments. To our knowledge, none of these works publishes a structural energy comparison against CPU heuristics; this paper fills that gap.

\paragraph{Heuristic baselines and benchmarks.} The strongest CPU
baselines in our regime are HGS~\cite{vidal2012hgs,vidal2022hgs} for VRP variants, which we run through its open-source PyVRP implementation~\cite{wouda2024pyvrp}, and LKH-3~\cite{helsgaun2017lkh} for TSP-derived problems. Standardized instance libraries (CVRPLIB~\cite{cvrplib}) and the DIMACS implementation-challenge methodology~\cite{johnson1996experimental} inform both our problem sampling and our reporting conventions.

\paragraph{Algorithm selection.} The deployment-volume framing of
\AET\ has structural similarities with per-instance algorithm selection~\cite{tornede2023algorithm}, where the choice between candidate solvers is amortized over an instance stream. We address the orthogonal question of \emph{when} a fixed neural solver becomes preferable to a fixed heuristic on aggregate energy.

\section{Discussion}
\label{sec:discussion}

We can now answer the question posed in the introduction, \emph{are neural CO solvers less energy-efficient than metaheuristics, given the energy cost of training them on GPUs?}, precisely.

The premise of the objection is correct: training a competitive neural CO solver costs substantially more energy than running a single heuristic call (\(105.2\,\mathrm{Wh}\) median over five seeds in our run, against \(\sim\!3\times10^{-2}\,\mathrm{Wh}\) per heuristic call at the median budget). The inferential step (that this upfront cost makes the learned method net-inefficient) is regime-dependent. Per-instance GPU energy is $P_\text{GPU} / (3600 \cdot \tau_\text{NN}(B))$ (Eq.~\eqref{eq:nn-per-inst}); at the throughput plateau (\(B \in [128, 512]\) on our hardware), the per-instance GPU energy is between two and four orders of magnitude below the per-instance CPU energy of the multi-thread metaheuristic baseline depending on the HGS time budget (Table~\ref{tab:perinst}, Fig.~\ref{fig:envelope}). The training cost is therefore an amortizable investment, not a fixed disqualification.

Whether that investment pays back depends on the deployment volume \(N\). \emph{Yes}, neural CO solvers are less energy-efficient than metaheuristics at deployment volumes below \AET, conditional on the training cost and the quality tolerance: the fixed expenditure has not yet been amortized. \emph{No}, structurally, above \AET: the asymptotic ratio of cumulative energies is bounded strictly below one (Proposition~\ref{prop:asymp}), and the heuristic, not the network, is the method whose $\Theta(N)$ marginal cost fails to amortize. The threshold (\S\ref{sec:framework}), the sensitivity surface (\S\ref{sec:sensitivity}), and the asymptotic slope (\S\ref{sec:asymp}) jointly state where, in the practitioner's context, the boundary between the two regimes lies. The reviewer who points at ``training takes too many GPU-hours'' has identified the right cost and the wrong conclusion: training energy is real, neural \emph{per-instance} energy is not, and the relevant quantity to compare is total cumulative energy at the deployment volume of interest.

\paragraph{Limitations of the present version.} The numerical values reported here are anchored to a single configuration: CVRP at \(n=50\), the attention-based autoregressive solver of Kool et~al.~\cite{kool2019attention}, 100 epochs, 20{,}000 training instances per epoch, five seeds, HGS via PyVRP as baseline. Extension to other CO problems, to other neural architectures, and to other problem sizes is mechanical given the released instrumentation and is left to future work.

\paragraph{Open extensions.} Three directions deserve formal
treatment. (i) Under distribution shift, retraining frequency re-enters \(E_\text{train}\) with a period that must be modeled; the stationary assumption of Definition~\ref{def:aet} should be relaxed to a renewal-process accounting. (ii) When the marginal energy gap \(E_\text{base}^\text{inst} - E_\text{NN}^\text{inst}\) is small relative to its standard error across training seeds, the threshold becomes a random variable; a stochastic refinement that reports \AET\ as a confidence interval, rather than a point estimate, is warranted. (iii) Embodied carbon enters the asymptotic ratio asymmetrically (Remark~\ref{rem:embodied-lambda}); characterizing the deployment regime under which the lifecycle ratio remains below one requires a joint condition on \(\tau_\text{NN}/\tau_\text{base}\) and \(C_h/C_{h'}\) that we leave open.

\paragraph{Code and data availability.} The code and benchmark pipeline are available at \url{https://github.com/sohaibafifi/aet}.

\section{Conclusion}
\label{sec:conclusion}

We have proposed \AET, a quantitative framework for the comparison between neural and heuristic combinatorial-optimization solvers under honest energy accounting. The framework couples a threshold metric, a lifecycle-aware embodied-carbon correction, and an asymptotic structural argument; it makes the operational critique of neural solvers falsifiable rather than rhetorical. On a CVRP configuration at \(n=50\) the measured operational crossover sits near \(4.56 \times 10^{3}\) deployed instances against a multi-thread HGS baseline at the median of a six-point per-budget sweep, with a per-instance neural-to-heuristic ratio of \(2.29\times10^{-3}\) at the same median; the crossover ranges from \(\sim\!5.6\times10^{4}\) at the tightest baseline budget down to \(\sim\!7\times10^{2}\) at the loosest. The neural solver is between two and four orders of magnitude cheaper per instance across the surface; the structural advantage of batched neural inference is expected to widen further at larger \(n\) where heuristic per-instance cost scales super-linearly. Quantifying that widening across CO problems, neural architectures, and hardware tiers is the next empirical step.

\bibliographystyle{plain}
\bibliography{references}

\appendix

\section{Notation}
\label{app:notation}

\begin{tabular}{ll}
\toprule
Symbol & Meaning \\
\midrule
\(M,\,\mathcal{B}\)        & Neural solver, baseline (heuristic) solver \\
\(\pi\)                    & Problem-instance distribution \\
\(\delta\)                 & Quality tolerance (additive optimality-gap budget) \\
\(\varepsilon\)            & Regularizer ensuring \AET\ denominator positivity \\
\(N\)                      & Deployed-instance count \\
\(E_\text{train}(M)\)      & Cumulative training energy (Wh), all sweeps included \\
\(E^\text{inst}_{\text{NN/base}}\) & Per-instance inference energy \\
\(\tau_\text{NN}(B)\)      & Throughput at inference batch \(B\) \\
\(P_\text{GPU},P_\text{CPU}\) & Operational power draws \\
\(C_h\)                    & Fabrication carbon of hardware \(h\) (kgCO\textsubscript{2}eq) \\
\(T_h\)                    & Useful lifetime (default 5~years) \\
\(\mathrm{CI}_\text{grid}\) & Regional electricity carbon intensity \\
PUE                        & Power Usage Effectiveness (default 1.4) \\
\bottomrule
\end{tabular}

\end{document}

%% file: tables/embodied_carbon_table.tex
\begin{table}[!htbp]
\centering
\footnotesize
\begin{tabular}{llrl}
\toprule
\textbf{Hardware ID} & \textbf{Kind} & \textbf{kgCO$_2$eq} & \textbf{Source} \\
\midrule
\multicolumn{4}{l}{\emph{GPUs (datacenter)}} \\
\texttt{nvidia-v100}      & GPU & 130.0 & Luccioni 2023~\cite{luccioni2023bloom} \\
\texttt{nvidia-a100}      & GPU & 150.0 & Luccioni 2023~\cite{luccioni2023bloom} \\
\texttt{nvidia-h100}      & GPU & 200.0 & Gupta 2021 extrap.~\cite{gupta2021chasing} \\
\texttt{nvidia-h200}      & GPU & 215.0 & H100 + HBM3e (est.) \\
\texttt{nvidia-b200}      & GPU & 400.0 & 2$\times$ reticle die (est.) \\
\texttt{nvidia-gb200}     & GPU & 900.0 & 2$\times$ B200 + Grace (est.) \\
\texttt{amd-mi250x}       & GPU & 140.0 & Dual MCM + HBM2e (est.) \\
\texttt{amd-mi300x}       & GPU & 250.0 & 8 XCD + HBM3 (est.) \\
\texttt{amd-mi350x}       & GPU & 300.0 & CDNA4 (est.) \\
\midrule
\multicolumn{4}{l}{\emph{GPUs (consumer)}} \\
\texttt{nvidia-rtx-4090}  & GPU & 120.0 & Gupta 2021 extrap.~\cite{gupta2021chasing} \\
\midrule
\multicolumn{4}{l}{\emph{CPUs (server)}} \\
\texttt{intel-xeon-8480}  & CPU & 65.0  & Sapphire Rapids (est.) \\
\texttt{amd-epyc-9965}    & CPU & 85.0  & Turin, 192c chiplets (est.) \\
\midrule
\multicolumn{4}{l}{\emph{Apple Silicon SoC (sample)}} \\
\texttt{apple-m1}         & SoC &  30.0 & Apple PER 2020~\cite{appleenv2023} \\
\texttt{apple-m3-max}     & SoC &  85.0 & PER + die scaling \\
\texttt{apple-m4}         & SoC &  36.0 & Apple PER 2024~\cite{appleenv2023} \\
\texttt{apple-m5-max}     & SoC &  95.0 & PER + die scaling \\
\midrule
\multicolumn{4}{l}{\emph{Generic fallback}} \\
\texttt{generic-gpu}      & GPU & 150.0 & midpoint \\
\texttt{generic-cpu}      & CPU & 55.0  & midpoint \\
\bottomrule
\end{tabular}
\caption{Representative subset of the embodied-carbon table used by the
tracker. Full-lifetime fabrication emissions in kgCO$_2$eq are
amortized linearly over a 5-year default lifetime. Recent NVIDIA,
AMD, and Apple SKUs without published PER data are extrapolated from
prior generations via the area + HBM-stack scaling of Gupta et
al.~\cite{gupta2021chasing} and flagged ``(est.)''. The released
module ships the complete lookup (45+ SKUs, including all M1--M5
Pro/Max/Ultra variants and full Hopper/Blackwell/MI3xx lineups).}
\label{tab:embodied}
\end{table}